\pgfplotsset{compat=newest}
\definecolor{customred}{RGB}{220,33,77}
\definecolor{lightred}{RGB}{255,150,150}
\definecolor{customblue}{RGB}{0,100,222}
\definecolor{lightblue}{RGB}{150,200,255}
\newcommand{\rebuttaladd}[1]{\textcolor{black}{#1}}
\newtheoremstyle{exampstyle}
  {3pt} 
  {3pt} 
  {\itshape} 
  {} 
  {\bfseries} 
  {.} 
  {.5em} 
  {} 
\theoremstyle{exampstyle} 
\newtheorem{lemma}{Lemma}
\newtheorem{theorem}{Theorem}
\newtheorem{remark}{Remark}
\newtheorem{example*}{Example*}
\theoremstyle{plain}
\DeclareMathOperator*{\argmin}{arg\,min}
\newcommand{\RNum}[1]{\uppercase\expandafter{\romannumeral #1\relax}}
\newcommand{\ofx}{\left(\bm{x}\right)}
\newcommand{\xb}{\bm{x}}
\newcommand{\ub}{\bm{u}}
\begin{document}

\title{
Parameter-Robust MPPI for Safe Online Learning\\
of Unknown Parameters}

\author{Matti Vahs\textsuperscript{1}, Jaeyoun Choi\textsuperscript{2}, Niklas Schmid\textsuperscript{3}, Jana Tumova\textsuperscript{1} and Chuchu Fan\textsuperscript{2}
	\thanks{\textsuperscript{1}Matti Vahs and Jana Tumova are with the Division of Robotics, Perception and Learning, KTH Royal Institute of Technology, Stockholm, Sweden and also affiliated with Digital Futures. Mail addresses: {\{\tt\small vahs, tumova\}}
		{\tt\small @kth.se}. Their work was partially supported by the Wallenberg AI, Autonomous
		Systems and Software Program (WASP) funded by the Knut and Alice
		Wallenberg Foundation. }
    \thanks{\textsuperscript{2}Jaeyoun Choi and Chuchu Fan are with the Reliable Autonomous Systems Lab, Massachusetts Institute of Technology, Cambridge, MA, USA. Mail addresses: {\{\tt\small cjy0051, chuchu\}}
		{\tt\small @mit.edu}. The authors were partly funded by the Ministry of Trade, Industry and Energy (MOTIE), Korea, through the Global Industrial Technology Cooperation Center (GITCC) Program, supervised by the Korea Institute for Advancement of Technology (KIAT) (Task No. P24680172).}
    \thanks{\textsuperscript{3}Niklas Schmid is with the Automatic Control Laboratory, Swiss Federal Institute of Technology in Z\"urich, Switzerland. Mail address: {\tt\small nikschmid}
		{\tt\small @ethz.ch}. The author was supported by the Swiss National Science Foundation under NCCR Automation under grant 51NF40\_225155 and an additional Mobility Grant.}%
}



\maketitle

\begin{abstract}
Robots deployed in dynamic environments must remain safe even when key physical parameters are uncertain or change over time. We propose Parameter-Robust  Model Predictive Path Integral~(PRMPPI) control, a framework that integrates online parameter learning with probabilistic safety constraints. PRMPPI maintains a particle-based belief over parameters via Stein Variational Gradient Descent, evaluates safety constraints using Conformal Prediction, and optimizes both a nominal performance-driven and a safety-focused backup trajectory in parallel. This yields a controller that is cautious at first, improves performance as parameters are learned, and ensures safety throughout. Simulation and hardware experiments demonstrate higher success rates, lower tracking error, and more accurate parameter estimates than baselines.
\end{abstract}

\begin{IEEEkeywords}
Robot Safety, Model Learning for Control
\end{IEEEkeywords}

\section{Introduction}
Robotic systems are often deployed in dynamic environments where reliable control must be maintained despite imperfect knowledge of the system’s dynamics. Sometimes the key parameters of a robot model -- such as friction, inertial properties or payload -- are not known precisely in advance and may even change over time. 
\emph{Domain randomization} is a popular technique, where a controller is robustified against a predefined distribution of models ~\cite{peng2018sim}.
While effective, this strategy is typically static: the uncertainty set is chosen a-priori and does not leverage the stream of measurements available during operation. In safety-critical tasks, this mismatch leads to either conservative performance (if the randomization is broad) or brittle behavior (if it is narrow). Ideally, to reduce significant engineering work in identifying a system's parameters, one would obtain a controller that is cautious initially but leverages measurements to refine parameters, improves performance over time, and maintains safety throughout.

A natural alternative to static domain randomization is to \emph{learn parameters online} via Bayesian estimation and adapt control to the evolving belief. Classical approaches trace back to system identification \cite{ljung1983theory}, with modern variants leveraging differentiable simulators~\cite{heiden2022probabilistic} or black-box simulation for likelihood-free updates \cite{barcelos2020disco}. In many such pipelines, however, estimation is treated as a module decoupled from control~\cite{rohr2023credible}, or the controller is designed for a single point estimate (e.g., maximum likelihood) of the parameters--an instance of certainty equivalence \cite{9636325}. Model Predictive Control (MPC) offers a way to couple learning with decision-making by updating models within a receding horizon. Gaussian Process (GP) residual modeling is a popular approach to adapt dynamics online and to expose epistemic uncertainty to the controller~\cite{deisenroth2013gaussian}.
Particle-based estimators have also been used to track parameter beliefs and to inform control \cite{abraham2020model, Barcelos-RSS-21}, but only optimize \emph{expected} cost and lack \emph{probabilistic} safety guarantees over the full prediction horizon. For safety-critical robotics, a challenge is not just adapting to parameter uncertainty, but ensuring that entire trajectories remain safe with high probability.
\begin{figure}
    \centering
    \includegraphics[width=0.47\textwidth]{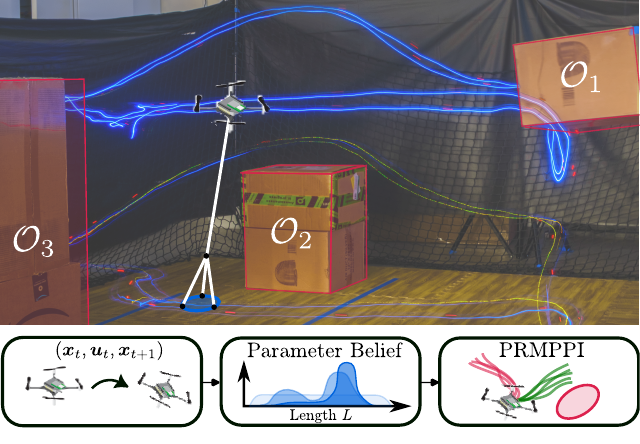}
    \vspace{-0.4cm}
    \caption{Long exposure shot of our hardware experiments of a Crazyflie 2.1 brushless quadrotor with a suspended payload of unknown length. The quadrotor has to track a square trajectory while avoiding collisions and has to improve its belief about its physical parameters from online observations.}
    \label{fig:firstpage}
    \vspace{-0.5cm}
\end{figure}

In our work, we aim to bridge this gap by combining particle-based online parameter learning with explicit probabilistic safety enforcement over full trajectories, rather than optimizing expected cost alone.
Three challenges arise in making this practical. \emph{First}, posterior distributions over physical parameters in nonlinear systems are generally non-Gaussian, which limits filters that assume a simple form. \emph{Second}, safety constraints are expressed as chance constraint over a receding horizon: guaranteeing that a robot will remain safe in the future requires reasoning about predicted trajectories
and obtaining closed-form, differentiable expressions for such constraints is difficult. \emph{Third}, embedding these
safety requirements into a tractable control strategy remains challenging, as constraints must be enforced repeatedly in real time under an evolving belief. Fig.~\ref{fig:firstpage} shows our proposed method on a quadrotor. By refining its belief about its payload length, it is eventually able to take shortcuts to safely improve performance.

Specifically, we propose parameter-robust model predictive path integral (PRMPPI) control that
\begin{enumerate}
    \item enables online learning of uncertain parameters \emph{safely} \rebuttaladd{by using a non-Gaussian particle belief},
    \item improves performance during the execution of a task by directly using the learned distribution in the controller,
    \item improves classic MPPI approaches through parallel optimization of \emph{robust} backup trajectories.
\end{enumerate}
An overview of our approach is shown in Fig.~\ref{fig:firstpage}. We maintain a particle-based belief over unknown parameters using Stein Variational Gradient Descent (SVGD). We then assess safety of candidate trajectories by sampling parameter hypotheses and computing conformal-prediction–based robustness scores. Finally, we embed these components into an MPPI controller that optimizes a nominal trajectory while maintaining a parallel robust backup trajectory. This enables safe online parameter refinement since the parameter uncertainty is explicitly propagated into trajectory evaluation and safety is enforced throughout the prediction horizon.
\subsection{Related Work} 
If parameter uncertainty is known in advance and can be described by a bounded set, robust controllers can be synthesized through approaches such as robust MPC~\cite{mayne2005robust}, Hamilton–Jacobi reachability analysis~\cite{mitchell2005time}, learning robust Control Barrier Functions~\cite{dawson2022learning}, or online rollouts~\cite{knoedler2025safety}. These methods typically consider uncertain parameters as adversarial noise. While this perspective yields rigorous guarantees, it can be unnecessarily conservative in settings where parameters can be progressively identified during operation. 

In online settings, a prominent approach is to use Gaussian Process (GP) regression to model residual dynamics or unmodeled disturbances and embed this model within MPC~\cite{hewing2020learning}, or within reinforcement learning to learn policies directly~\cite{deisenroth2011pilco}. The variance of the GP prediction provides a natural quantification of epistemic uncertainty, enabling chance-constrained optimal control formulations~\cite{hewing2020learning}. To make such formulations tractable, different approximations for uncertainty propagation have been proposed: linearization via Taylor expansion~\cite{hewing2019cautious} or the unscented transform~\cite{ostafew2016robust}. These methods work when uncertainty is small and unimodal, but become less reliable for larger uncertainties and highly nonlinear dynamics.

Reinforcement learning often employs domain randomization to bridge the sim-to-real gap, training policies that are robust to a wide range of simulator parameters~\cite{peng2018sim}. While effective for transfer, static randomization again ignores the fact that system parameters can be progressively inferred during deployment. Recent extensions propose \emph{adaptive domain randomization}~\cite{mehta2020active, possas2020online}, where both the parameter distribution and the policy are updated online using real-world observations. These methods improve performance over time but lack explicit safety enforcement during adaptation.

To handle non-Gaussian and potentially multimodal parameter distributions, several works adopt nonparametric particle-based methods \cite{abraham2020model, barcelos2020disco, wang2021adaptive, Barcelos-RSS-21}. In particular,~\cite{abraham2020model, barcelos2020disco} sample parameter hypotheses from a belief distribution and evaluate them within a Model Predictive Path Integral (MPPI) control framework, while~\cite{Barcelos-RSS-21} applies Stein Variational Gradient Descent~\cite{liu2016stein} to jointly update the parameter belief and the trajectory distribution. These approaches demonstrate the feasibility of combining online parameter inference with sampling-based control. However, they focus exclusively on minimizing expected cost and do not incorporate safety constraints, making them unsuitable for safety-critical robotic applications.

\section{Preliminaries and Problem Formulation}
\subsection{Stein Variational Gradient Descent}
\label{sec:SVGD}
Stein Variational Gradient Descent (SVGD) is a particle-based variational inference method that transforms an initial distribution $q\ofx$ into a target distribution $p_t\ofx$ by iteratively transporting a set of particles to reduce the KL divergence $\mathrm{KL}(q\|p_t)$ \cite{liu2016stein}.

Specifically, the candidate distribution $q\ofx$ is represented nonparametrically by a set of particles 
$\{\xb^{(i)}\}_{i=1}^N$ 
and is updated according to
\begin{align*}
    \xb_{k+1}^{(i)} &= \xb_{k}^{(i)} + \epsilon \,\phi^*(\xb_{k}^{(i)}),
\end{align*}
where the transport direction $\phi^*$ is chosen to maximally decrease the KL divergence,
\begin{align*}
    \phi^*(\xb) 
    &= \underset{\phi \in \Phi}{\mathrm{argmax}}
    \Big(-\frac{\mathrm{d}}{\mathrm{d}\epsilon} 
    \mathrm{KL}(q(\xb+\epsilon\phi)\|p_t) \big|_{\epsilon=0}\Big).
\end{align*}
Here, $\Phi$ is a Reproducing Kernel Hilbert Space (RKHS) induced by a kernel function $k(\xb,\xb')$. It is shown in \cite{liu2016stein} that this parameterization yields a closed-form solution
\begin{align*}
    \phi^*(\cdot) = \mathbb{E}_{\xb\sim q}\big[\nabla_{\xb}\log p_t\ofx \, k(\xb,\cdot) + \nabla_{\xb} k(\xb,\cdot)\big],
\end{align*}
which can be approximated empirically as
\begin{align}
    \phi^*(\cdot)=\frac{1}{N}\sum_{i=1}^N\big[\nabla\log p_t(\xb^{(i)})\,k(\xb^{(i)},\cdot)
    +\nabla k(\xb^{(i)},\cdot)\big] \label{eq:SVGDUpdate}
\end{align}
by using particles. In effect, SVGD transports particles toward high-probability regions of $p_t$ while maintaining diversity through the repulsive kernel term. Further, although distributions are approximated using a finite number of particles, it is shown in \cite{shi2023finite} that the transported distribution converges to the true target in the mean field limit, i.e. as $N\rightarrow \infty$.

\subsection{Conformal Prediction}
\label{sec:CP}
Conformal Prediction (CP) is a lightweight statistical tool for uncertainty quantification that can enable practical safety guarantees for autonomous systems \cite{shafer2008tutorial}.
\rebuttaladd{In our setting, CP provides a way to quantify the uncertainty in violating safety constraints based on samples of the parameter distribution which are propagated through the robot dynamics to calculate the safety probability.}
\begin{lemma}
    Let $Z, Z^{(1)},\dots, Z^{(k)}$ be k + 1 independent and identically distributed (i.i.d.) real-valued random variables. Let $Z^{(1)},\dots, Z^{(k)}$ be sorted in non-decreasing order and define $Z^{k+1} := \infty$. For $\delta \in (0, 1)$, it holds that 
    \begin{align*}
        \mathrm{Pr}\left[Z \leq \bar{Z}\right] \geq 1- \delta
    \end{align*}
    where $\bar{Z} = Z^{(r)}$ with $ r = \lceil(k+1)(1-\delta)\rceil$
    and where $\lceil\cdot\rceil$ is the ceiling function.
    \label{lemma:CP}
\end{lemma}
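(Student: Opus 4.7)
The plan is to exploit the exchangeability that follows from the i.i.d.\ assumption: among the collection $Z, Z^{(1)}, \dots, Z^{(k)}$, every permutation of the indices has the same joint distribution, so the rank of $Z$ within this set is uniformly distributed over $\{1, 2, \dots, k+1\}$. The event $Z \leq \bar Z = Z^{(r)}$ is then equivalent (up to a careful treatment of ties) to the rank of $Z$ being at most $r$, and a direct counting argument together with the choice $r = \lceil (k+1)(1-\delta)\rceil$ will deliver the probability bound.

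Concretely, I would proceed in four steps. First, define the rank $R$ of $Z$ in the sorted collection $Z, Z^{(1)}, \dots, Z^{(k)}$; by exchangeability, $\Pr[R = j] = 1/(k+1)$ for each $j \in \{1, \dots, k+1\}$. Second, observe that $Z \leq Z^{(r)}$ holds whenever $R \leq r$: if $Z$ occupies one of the smallest $r$ positions of the combined order statistics, it cannot exceed $Z^{(r)}$. The convention $Z^{(k+1)} := \infty$ is what makes this argument clean when $r = k+1$ (i.e., when $\delta$ is so small that no finite quantile suffices). Third, sum the uniform probabilities to get
\begin{align*}
\Pr\bigl[Z \leq Z^{(r)}\bigr] \;\geq\; \Pr[R \leq r] \;=\; \frac{r}{k+1}.
\end{align*}
Fourth, substitute $r = \lceil (k+1)(1-\delta)\rceil$, which by definition of the ceiling satisfies $r \geq (k+1)(1-\delta)$, so that $r/(k+1) \geq 1 - \delta$, yielding the claim.

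The main subtlety -- and the only place where a casual argument can go wrong -- is the treatment of ties. Because the $Z^{(i)}$ are real-valued but not assumed continuous, there is a positive probability that $Z$ equals one of the $Z^{(i)}$, and the rank $R$ is then not uniquely defined without a tie-breaking rule. I would handle this by defining $R$ as the \emph{largest} index $j$ such that $Z$ could appear in position $j$ of the sorted list, so that $\{R \leq r\}$ still implies $\{Z \leq Z^{(r)}\}$; exchangeability then guarantees $\Pr[R \leq r] \geq r/(k+1)$ because ties only inflate the probability of being among the smallest $r$ elements. The infinity convention $Z^{(k+1)} := \infty$ is essential in the boundary case $(k+1)(1-\delta) > k$, ensuring that the statement remains vacuously true rather than ill-defined.
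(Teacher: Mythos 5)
The paper does not prove this lemma at all --- it is imported verbatim from the conformal prediction literature (the standard calibration/quantile lemma, cf.\ the cited tutorial), so there is no in-paper argument to compare against. Your proposal is the canonical exchangeability proof of that result, and its architecture is sound: rank of $Z$ in the pooled sample, $\{R\le r\}\subseteq\{Z\le Z^{(r)}\}$, $\Pr[R\le r]\ge r/(k+1)$, and $r=\lceil(k+1)(1-\delta)\rceil\ge(k+1)(1-\delta)$, with the $Z^{(k+1)}:=\infty$ convention covering the boundary case $r=k+1$.

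The one concrete flaw is in the tie-breaking rule you chose, and it is exactly at the spot you flagged as delicate. If $R$ is defined as the \emph{largest} position $Z$ could occupy in the sorted pooled sample (equivalently $R=\lvert\{l: W_l\le Z\}\rvert$), then ties \emph{deflate} rather than inflate $\Pr[R\le r]$: in the degenerate i.i.d.\ case where all $k+1$ values are equal, $R=k+1$ almost surely and $\Pr[R\le r]=0$ for every $r\le k$, so the intermediate claim $\Pr[R\le r]\ge r/(k+1)$ is false as stated (even though the lemma's conclusion still holds there, because the containment $\{R\le r\}\subseteq\{Z\le Z^{(r)}\}$ is lossy). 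The correct choice is the opposite one: take the \emph{smallest} possible position, $R=1+\lvert\{i: Z^{(i)}<Z\}\rvert$. Then $\{R\le r\}$ means at most $r-1$ calibration scores lie strictly below $Z$, which forces $Z^{(r)}\ge Z$, so the containment survives; and deterministically at least $r$ of the $k+1$ pooled points have at most $r-1$ others strictly below them, so exchangeability gives $\Pr[R\le r]\ge r/(k+1)$, with ties now only enlarging the event. With that one substitution your argument closes completely.
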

The minimum number of required samples to provide a probabilistic guarantee is given by 
$k \geq \frac{1 - \delta}{\delta}$. The variable $Z$ is usually referred to as the \emph{non-conformity score}.

\subsection{Model Predictive Path Integral Control}
\label{sec:MPPI}
MPPI is a control method to solve stochastic Optimal Control Problems (OCPs) for discrete-time dynamical systems
\begin{align*}
    \xb_{k+1} = \bm{F}(\xb_k, \bm{v}_k), \hspace{0.5cm} \bm{v}_k \sim \mathcal{N}(\ub_k, \bm{\Sigma}).
\end{align*}
MPPI samples $M$ random control input sequences $\bm{v}_{0:K-1}^{(1:M)}$ of length $K-1$ and forward simulates the system dynamics given the current state $\xb_0$ to obtain $X^{(m)} = [\xb_0, \bm{F}(\xb_0, \bm{v}_0^{(m)}), \dots, \bm{F}(\xb_{K-1}, \bm{v}_{K-1}^{(m)})]$. Then, given the state rollouts and a cost function $J(X)$ to be minimized, each rollout is weighted by an importance sampling weight 
\begin{align}
    w^{(m)} = \frac{1}{\eta} \exp \left(- \frac{1}{\beta} \left(J(X^{(m)}) - \rho)\right)\right),\label{eq:ImportanceWeight}
\end{align}
where $\eta$ is a normalization constant ensuring $\sum_{m=1}^M w^{(m)} = 1$, $\rho = \min_m J(X^{(m)})$ is subtracted for numerical stability and $\beta$ is the \emph{inverse temperature} which serves as a tuning parameter for the sharpness of the control distribution. Finally, an approximate optimal control sequence can be obtained as
\begin{align*}
    \ub_{0:K-1}^* = \sum_{m=0}^M w^{(m)} \bm{v}_{0:K-1}^{(m)},
\end{align*}
which is a weighted average of sampled control trajectories and is applied in a receding horizon fashion. For a detailed discussion and theoretical properties, we refer to \cite{williams2017model}

\subsection{Problem Setting}
Consider a robot that is modeled by a discrete-time system
\begin{align}
    \xb_{k+1} = \bm{f}(\xb_k, \ub_k, \theta)\label{eq:Dynamics}
\end{align}
where $\xb \in \mathcal{X} \subset \mathbb{R}^{n_x}$ is the state, $\ub \in \mathcal{U} \subset \mathbb{R}^{n_u}$ is the control input and $\theta \in \Theta \subset \mathbb{R}^{n_{\theta}}$ are unknown parameters of the system such as a robot's mass or its inertial properties. The parameters are only known up to a prior distribution on the parameters $p(\theta_0)$, e.g. a uniform distribution over masses. 

During operation the robot observes its state and has to complete a safety-critical task which is expressed as finite horizon chance constrained optimal control problem
\begin{equation}
        \begin{aligned}
         \min_{\bm{u}_0, \dots, \ub_{N-1} }~  &J = \mathbb{E}\left\{\sum_{k=0}^{N-1} \ell(\xb_k,\ub_k) + L (\xb_N)\right\}\\
        \text{s.t. } \quad &\xb_{k+1} = \bm{f}(\xb_k, \ub_k, \theta),\hspace{1.4cm}\forall k = 0, \dots, N-1\\
        \quad &\ub_k \in \mathcal{U},\hspace{3.35cm}\forall k = 0, \dots, N-1\\
        &\mathrm{Pr}\left[\cap_{k=0}^N \xb_k \in \mathcal{C}\right] \geq 1 - \delta\\
        &\xb_0 = \xb(t),\hspace{0.3cm} \theta \sim p(\theta(t))
        \end{aligned}
    \label{eq:OCP}
\end{equation}
where $\ell: \mathcal{X} \times \mathcal{U} \mapsto \mathbb{R}$ is the stage cost, $L: \mathcal{X}\mapsto\mathbb{R}$ is the terminal cost and 
    $\mathcal{C} = \left\{\xb \in \mathcal{X} \mid h\ofx \geq 0\right\}\subseteq \mathcal{X}$
is a safe set within which we want the state to remain with a desired probability threshold $1-\delta$. \rebuttaladd{The safe set is defined as the superlevel set of some continuous function $h:\mathcal{X}\mapsto \mathbb{R}$ which could be, e.g., the collision-free space.}  
 
We approximate the solution to Eq.~\eqref{eq:OCP} by 1) representing the posterior parameter distribution using an SVGD-based particle filter, 2) quantifying the particle distribution based safety probability using CP and 3) incorporating this uncertainty into a MPPI framework that solves the OCP.

\section{Parameter-Robust MPPI}
\subsection{Online Parameter Estimation}
Accurate estimation of uncertain system parameters is essential for our proposed control framework. A principled way to infer these parameters is to use Bayes' rule, which updates a prior belief $p(\theta\mid \xb_{1:t},\ub_{1:t-1})$ about the parameters $\theta$ using new state observations $\xb_{t+1}$ and applied control inputs $\ub_{t}$
\begin{align}
    p(\theta\!\mid\!\xb_{1:t+1}, \ub_{1:t}) 
    = \frac{p(\xb_{t+1}\!\mid\!\theta, \xb_{t}, \ub_{t})\,
            p(\theta\!\mid\!\xb_{1:t}, \ub_{1:t})}
           {p(\xb_{t+1}\!\mid\!\xb_{1:t}, \ub_{1:t})}.\label{eq:Bayes}
\end{align}
One can think of this update as shifting the probability mass to parameter values that best explain the observed state transition.
While conceptually straightforward, this posterior is generally intractable for nonlinear systems since the denominator requires to solve an integral over the entire parameter space $\Theta$.

A common approach is to approximate the posterior using a set of particles $\{\theta^{(i)}\}_{i=1}^N$ and to update them using importance sampling~\cite{thrun2005probabilistic}. While simple, such Sequential Importance Resampling (SIR) filters tend to suffer from weight degeneracy, where after several updates only a few particles carry most probability mass, causing mode collapse and poor exploration in multi-modal or high-dimensional posteriors.
We therefore adopt SVGD, which transports an unweighted set of particles toward high-probability regions while preserving diversity, avoiding the aforementioned mode collapse problem.

In order to apply SVGD as described in Sec.~\ref{sec:SVGD}, we note that the update in Eq.~\eqref{eq:SVGDUpdate} only depends on the gradient of the log-likelihood of the target density. Since the intractable denominator in Eq.~\eqref{eq:Bayes} only serves as a normalization constant, we can define the target density
\begin{align}
    p_t(\theta) 
    = p(\xb_{t+1}\mid \theta, \xb_{t}, \ub_{t})\;
      p(\theta\mid\xb_{1:t}, \ub_{1:t})\label{eq:Target}
\end{align}
which is proportional to $p(\theta\!\mid\!\xb_{1:t+1}, \ub_{1:t}) $. As a consequence, $\nabla_{\theta}\mathrm{log}~p_t = \nabla_{\theta}\mathrm{log~}p(\theta\mid \xb_{1:t+1}, \ub_{1:t})$.
The first term of the target in Eq.~\eqref{eq:Target} is given by the system dynamics
\begin{align*}
    \xb_{t+1} = \bm{f}(\xb_{t}, \ub_{t}, \theta) + \bm{\xi},
    \hspace{0.3cm} \bm{\xi}\sim\mathcal{N}(\bm{0},\bm{\Sigma}_\xi),
\end{align*}
where we assume zero-mean Gaussian observation noise with known covariance $\bm{\Sigma}_\xi$. Since we do not directly assume noisy state measurements in our problem setting, we can either estimate the covariance from real-world data or use it as a tuning parameter to account for a model mismatch.

To apply SVGD to the target, we further require the prior $p(\theta\mid\xb_{1:t},\ub_{1:t})$ to be differentiable with respect to $\theta$. Since our belief is represented by particles which is not directly differentiable, we perform a kernel density estimation (KDE)
\begin{align}
    p(\theta\mid \xb_{1:t},\ub_{1:t})
    \approx \frac{1}{N}\sum_{i=1}^N K_{\rebuttaladd{\sigma}}(\theta,\theta^{(i)})\label{eq:KDE}
\end{align}
where $K_{\rebuttaladd{\sigma}}: \Theta \times\Theta\mapsto \mathbb{R}$ is a positive definite kernel with bandwidth $\rebuttaladd{\sigma}>0$. This ensures a continuous, differentiable belief that enables gradient-based updates. Further it allows sampling new parameter candidates which is important for the uncertainty quantification discussed in Sec.~\ref{sec:UncQuant}. Crucially, the quality of the estimated density depends on the correctness of the bandwidth $\rebuttaladd{\sigma}$. 
\rebuttaladd{In practice, we use a radial basis function (RBF) kernel for $K_{\sigma}$ and select its bandwidth using Silverman's rule of thumb~\cite{silverman2018density}, which we found to provide stable density estimates for the parameter particles.}

Combining the transition likelihood and the prior, we can compute the gradient of the log unnormalized posterior as
\begin{align*}
    \nabla_{\theta}\log p_t(\theta) 
    =&\;\bm{\Sigma}_{\xi}^{-1}(\xb_{t+1} - \bm{f}(\xb_{t},\ub_{t},\theta))
       \nabla_{\theta}\bm{f} \\
    &+ \nabla_{\theta} \log \Big(\frac{1}{N}\sum_{i=1}^N K_{\rebuttaladd{\sigma}}(\theta,\theta^{(i)})\Big)
\end{align*}
which can be evaluated efficiently in closed form and used within the SVGD update rule to transport the parameter particles toward high-likelihood regions. 

\subsection{Receding Horizon Controller}
In this section, we discuss how to use the parameter belief $p(\theta\mid \xb_{1:t}, \ub_{1:t})$ in motion planning to guarantee probabilistic constraint satisfaction over a receding horizon. We calculate the probability of satisfaction along a nominal path, which we then incorporate into a sampling-based MPPI controller.
\subsubsection{Uncertainty Quantification in Safety Violations}
\label{sec:UncQuant}
Given the current state $\xb_t$ and a control input trajectory over a receding horizon $N$, i.e. $\ub_0, \dots, \ub_{N-1}$, we can propagate the state through the dynamics model to obtain a state trajectory $\xb_{t:t+N}$. However, since we have uncertainty in the model's parameter $\theta$, the evolution of the state becomes a random process.

Following Sec.~\ref{sec:CP}, we define the non-conformity score
\begin{align}
    \rho = - \underset{k=0,\dots,N}{\mathrm{min}} \ h(\xb_k)\label{eq:nonconformity}
\end{align}
which can be interpreted as the \rebuttaladd{negative} distance to the unsafe set along a rolled out trajectory. The non-conformity score is a mapping $\rho : \Theta \mapsto \mathbb{R}$ from the space of parameters to a scalar value, given a control sequence $\ub_{0:N-1}$. Since the parameters are uncertain, the non-conformity score is a random variable as well. However, we can draw samples from the distribution over non-conformity scores by first generating samples from $p(\theta\mid\xb_{1:t}, \ub_{1:t})$ and then propagating the dynamics under each parameter hypothesis $\theta^{(i)}$, the current state $\xb_t$ and the control input sequence. 

\begin{theorem}
    Let $\theta^{(1)},\dots, \theta^{(k)}$ be $k$ i.i.d. samples drawn from $p(\theta\mid \xb_{1:t}, \ub_{1:t})$. Let $\rho^{(1)},\dots, \rho^{(k)}$ be the resulting non-conformity scores sorted in increasing order and define $\rho^{(k+1)} := \infty$. If it holds that $\rho^{(r)} \leq 0$ with $r = \lceil(k+1)(1-\delta)\rceil$, then
    \begin{align*}
        \mathrm{Pr}\left[\cap_{k=0}^N \xb_k \in \mathcal{C}\right] \geq 1- \delta
    \end{align*}
    holds for a given probability threshold $\delta \in (0, 1)$.
    \label{thm:CP}
\end{theorem}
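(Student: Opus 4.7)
The plan is to reduce Theorem~\ref{thm:CP} directly to Lemma~\ref{lemma:CP} by recognizing that the non-conformity scores of the parameter particles play the role of the $k$ calibration samples, while the score evaluated at the true (but unknown) parameter plays the role of the test variable $Z$. First, I would make precise which randomness the probability in the conclusion is over: since $\xbt$ and the candidate control sequence $\ub_{0:N-1}$ are fixed, the non-conformity score defined in Eq.~\eqref{eq:nonconformity} is a deterministic function of $\theta$ alone, and in the Bayesian view the implicit distribution of the true parameter is the posterior $p(\theta \mid \xb_{1:t}, \ub_{1:t})$.

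Next, I would set up the CP application by introducing an additional iid draw $\theta \sim p(\theta \mid \xb_{1:t}, \ub_{1:t})$, independent of the particles $\theta^{(1)}, \dots, \theta^{(k)}$. Then the $k+1$ scalar random variables $Z := \rho(\theta)$ and $Z^{(i)} := \rho(\theta^{(i)})$ are iid, and Lemma~\ref{lemma:CP} immediately yields
\begin{align*}
    \Pr\!\left[Z \leq \rho^{(r)}\right] \geq 1-\delta,
\end{align*}
with $r = \lceil(k+1)(1-\delta)\rceil$, where $\rho^{(r)}$ is the order statistic defined in the theorem and the convention $\rho^{(k+1)} := \infty$ handles the regime $r = k+1$.

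Finally, I would connect the CP bound to the safety event. From Eq.~\eqref{eq:nonconformity}, $\rho(\theta) \leq 0$ is equivalent to $h(\xb_k) \geq 0$ for all $k = 0, \dots, N$, i.e., to $\cap_{k=0}^N \xb_k \in \mathcal{C}$. Under the hypothesis $\rho^{(r)} \leq 0$, the set inclusion $\{Z \leq \rho^{(r)}\} \subseteq \{Z \leq 0\}$, together with monotonicity of probability, gives
\begin{align*}
    \Pr\!\left[\cap_{k=0}^N \xb_k \in \mathcal{C}\right]
    = \Pr[Z \leq 0]
    \geq \Pr\!\left[Z \leq \rho^{(r)}\right]
    \geq 1 - \delta,
\end{align*}
which is the claim. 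The main difficulty is conceptual rather than technical: one must correctly identify the probability measure in the conclusion so that the true parameter and the particles can be treated as exchangeable samples from the posterior. Once that interpretation is fixed, the proof reduces to a one-line invocation of Lemma~\ref{lemma:CP} together with a monotonicity argument.
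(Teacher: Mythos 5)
Your proposal is correct and follows essentially the same route as the paper's proof: both reduce the claim to Lemma~\ref{lemma:CP} by treating the particle scores as calibration samples and the score at the true parameter as the test variable, then use $\rho^{(r)} \leq 0$ to pass from the conformal bound to the safety event. Your write-up is merely more explicit about the underlying probability measure and the monotonicity step, which the paper leaves implicit.
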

\begin{proof}
    First, we note that the following two statements
    \begin{align*}
        \mathrm{Pr}\left[\cap_{k=0}^N \xb_k \in \mathcal{C}\right] \geq 1- \delta\\
        \Leftrightarrow\mathrm{Pr}\left[\underset{k=0,\dots,N}{\mathrm{min}} \ h(\xb_k) \geq 0\right] \geq 1- \delta
    \end{align*}
    are equivalent. Thus, by defining a non-conformity score according to Eq.~\eqref{eq:nonconformity} and sampling i.i.d. parameters, we get i.i.d. samples from the distribution over non-conformity scores. According to Lemma~\ref{lemma:CP}, we get
    \begin{align*}
        \mathrm{Pr}\left[- \underset{k=0,\dots,N}{\mathrm{min}} \ h(\xb_k) \leq \rho^{(r)}\right] \geq 1-\delta
    \end{align*}
    \rebuttaladd{with $\rho^{(r)}\leq 0$} which concludes that the true trajectory realization is safe with desired probability.
\end{proof}
\begin{remark}
    \label{remark1}
    The proposed probabilistic safety guarantee only holds if $\theta$ is sampled from the true distribution over parameters. However, there is a mismatch between our belief of $p(\theta\mid\xb_{1:t}, \ub_{1:t})$ and the true posterior since we only use a finite number of particles in practice. \rebuttaladd{This mismatch means that the safety guarantee may become slightly conservative or occasionally violated when the particle approximation poorly captures the true posterior. Nonetheless, because SVGD maintains diverse particles and updates them online, we generally observe that the estimated belief remains sufficiently accurate for the CP-based test to provide reliable behavior in real-world settings.}
    Such mismatch could be addressed by considering recently proposed error bounds for finite particle systems in SVGD~\cite{shi2023finite} and combine them with extended versions of CP such as robust CP which we leave for future work.
\end{remark}

\subsubsection{Control Design}
\label{sec:Control}
We propose two extensions of MPPI to approximate the OCP's solution. Specifically, we propose extensions to handle constraint satisfaction which is not straight forward in standard MPPI. 

First, when rolling out the dynamics for a control sequence $V^{(j)} := \ub_{1:K-1} + \bm{\eta}_{1:K-1}^{(j)}$ perturbed by Gaussian noise $\bm{\eta}$, we sample $P=\lceil\nicefrac{(1-\delta)}{\delta}\rceil$ i.i.d. parameters from the current belief $p(\theta\mid\xb_{1:t}, \ub_{1:t})$ and rollout the dynamics model in Eq.~\eqref{eq:Dynamics} for each parameter hypothesis. 
This will result in $P$ state trajectory samples for which we can calculate the non-conformity scores in Eq.~\eqref{eq:nonconformity} to obtain the \emph{robustness} $R := - \rho^{(r)}$ from Theorem~\ref{thm:CP}. Note that if $R >0$, the resulting trajectory is safe with desired probability $1-\delta$. Thus, the joint chance constraint defined in Eq.~\eqref{eq:OCP} can be evaluated in a simple sampling-based manner.

MPPI, as introduced in Sec.~\ref{sec:MPPI}, does not natively handle constraint satisfaction. Thus, one common way of enforcing constraints is by penalizing them in the cost function 
\begin{align}
    \vphantom{}^NJ = \mathbb{E}_{p}\left\{J\left(\xb_{1:N}\right)\right\} + \rebuttaladd{W} \cdot \mathds{1}\left\{R\left(\xb_{1:N}^{(1:P)}\right)  < 0\right\}\label{eq:CostNominal}
\end{align}
where $\rebuttaladd{W}\gg 0$ is a large penalization constant \rebuttaladd{and P denotes the number of parameter samples}. However, this simply transforms the safety constraint to a soft constraint which can still result in safety violations as MPPI can diverge when all sampled control sequences become unsafe. This can happen, e.g., if the environment changes too fast or if the simulated dynamics during the MPPI rollouts do not capture the noise in the actual dynamics.

To remedy this, we propose a parallel optimization structure where, in addition to a nominal optimization of the objective function in Eq.~\eqref{eq:CostNominal}, we also optimize a robust backup trajectory which is purely dedicated to maximizing the safety probability. This can be expressed as a robust cost function
\begin{align}
    \vphantom{}^R J = - R\left(\xb_{1:N}^{(1:P)}\right)\label{eq:robust_cost}
\end{align}
that we seek to minimize in parallel. Thus, at each control step we are optimizing two trajectories simultaneously, a \emph{nominal} (N) and a \emph{robust} (R) trajectory which we refer to as $\vphantom{}^N U$ and $\vphantom{}^R U$, respectively. The main benefit of the robust trajectory is that we can bootstrap the nominal optimization from both trajectories which prevents the aforementioned problem of all rollouts suddenly becoming infeasible. An example of such scenario is showcased in Fig.~\ref{fig:TrajVis}.

\newlength{\textfloatsepsave} \setlength{\textfloatsepsave}{\textfloatsep} 

\setlength{\textfloatsep}{0pt}
\begin{algorithm}[t]
\caption{Parameter-Robust MPPI}
\label{algo:PRMPPI}
\begin{algorithmic}[1]
\State \textbf{Initialize:} $\vphantom{}^N U = \vphantom{}^R U = \mathbf{0}_{n_u \times K-1}$
\State $\vphantom{}^N U, \vphantom{}^R U \gets \textsc{TimeShift}(\vphantom{}^N U, \vphantom{}^R U)$
\State $\theta^{(1:P)} \gets$ \textsc{SampleParameters()}
\ForAll{\rebuttaladd{samples }$m = 0$ to $M$ \textbf{in parallel}}
    \State $\bm{\eta}_{1:K-1}^{(m)} \gets$ \textsc{SampleNoiseSequence()}
    \State $\vphantom{}^N V^{(m)} =\vphantom{}^NU + \bm{\eta}_{1:K-1}^{(m)}$; $\vphantom{}^R V^{(m)} =\vphantom{}^RU + \bm{\eta}_{1:K-1}^{(m)}$ 
    \State $(\vphantom{}^NJ_1^{(m)}, \vphantom{}^RJ_1^{(m)}) \gets \textsc{Dyn}(\xb_t, \vphantom{}^N V^{(m)}, \theta^{(1:P)})$
    \State $(\vphantom{}^NJ_2^{(m)}, \vphantom{}^RJ_2^{(m)}) \gets \textsc{Dyn}(\xb_t, \vphantom{}^R V^{(m)}, \theta^{(1:P)})$
\EndFor
\State $\vphantom{}^Nw_1, \vphantom{}^Nw_2 \gets \textsc{ImportanceSampling}(\vphantom{}^NJ_1, \vphantom{}^NJ_2)$ \hfill $\triangleright$ \eqref{eq:ImportanceWeight}
\State $\vphantom{}^N U_1, \vphantom{}^N U_2 \gets \sum_{m}\vphantom{}^Nw_1^{(m)}\vphantom{}^N V^{(m)}, \sum_{m}\vphantom{}^Nw_2^{(m)}\vphantom{}^R V^{(m)}$
\State $\vphantom{}^N U = \argmin_{i\in\{1, 2\}}~ \textsc{cost}(\textsc{Dyn}(\xb_t, \vphantom{}^N U_i, \theta^{(1:P)}))$
\State $\vphantom{}^R w \gets \textsc{ImportanceSampling}( \vphantom{}^R J_2)$ \hfill $\triangleright$ \eqref{eq:ImportanceWeight}
\State $\vphantom{}^R U = \sum_{m}\vphantom{}^Rw^{(m)}\vphantom{}^R V^{(m)}$
\If{$\textsc{Robustness}(\textsc{Dyn}(x_0, \vphantom{}^N U, \theta^{(1:P)})) > 0$}
\State \textbf{return:} $\vphantom{}^N\ub_0$
\Else
\State $\vphantom{}^NU \gets \vphantom{}^RU\hspace{0.3cm}$ \textbf{return:} $\vphantom{}^R\ub_0$
\EndIf
\end{algorithmic}
\end{algorithm}
\setlength{\textfloatsep}{\textfloatsepsave}

We summarize the resulting algorithm in Alg.~\ref{algo:PRMPPI}. First, we initialize two trajectories to be optimized. At every control step, the previous loop solution is shifted using the $\textsc{TimeShift}(\cdot)$ operator and $P$ i.i.d. parameters are sampled from the current KDE of the particle belief in Eq.~\eqref{eq:KDE}. Then, we generate $2\cdot M$ disturbed control sequences, bootstrapped both from the nominal and the robust trajectory (line 6). For each input sequence, we rollout the dynamics under all parameter hypotheses and calculate the resulting expected costs and robustnesses which is summarized in the $\textsc{Dyn}(\cdot)$ function. \rebuttaladd{We want to highlight that although the two rollouts of nominal and robust trajectories are shown sequentially in lines 7-8, they can be performed in parallel as they are completely independent.} In lines $9-11$, we optimize both trajectory distributions with respect to the nominal cost in Eq.~\eqref{eq:CostNominal} and pick the minimal one. Similarly, the robust trajectory is optimized in lines $12-13$ with respect to Eq.~\eqref{eq:robust_cost}. Finally, to ensure constraint satisfaction, we rollout the nominal control sequence (line 14) and apply the robust control input if safety is violated.

\section{Experiments}
We evaluate our method in simulation as well as hardware experiments in which model inaccuracies in terms of uncertain parameters occur. Specifically, we
\begin{enumerate}
    \item show improved performance over existing baselines, both in terms of performance and safety in Sec.~\ref{sec:Baselines},
    \item validate that parallel optimization yields safer behavior in fully and partially observable settings in Sec.~\ref{sec:AblationRobust}, 
    \item showcase real-world performance improvement through online learning on a quadcopter with cable-suspended payload with complex safety requirements in Sec.~\ref{sec:Hardware}.
\end{enumerate}
\vspace{-0.2cm}
\begin{figure}[t]
    \centering
    \includegraphics[width=1\linewidth]{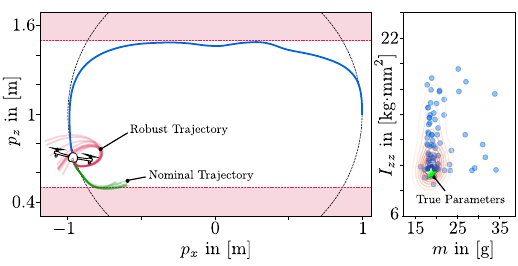}
    \vspace{-0.7cm}
    \caption{Illustration of the simulated 2D Quadrotor environment. \textbf{Left:} The circular path to be tracked is shown in black while the traversed path is colored in blue. The nominal trajectory is entering the unsafe region which is shaded in red. \textbf{Right:} A snapshot of the parameter belief in the depicted time step. Particles are denoted by blue circles and the contours of the KDE are shown in red.}
    \vspace{-0.2cm}
    \label{fig:TrajVis}
\end{figure}
\begin{table*}[ht]
\centering
\renewcommand{\arraystretch}{1.1}
\setlength{\tabcolsep}{10pt}
\caption{Comparison Across Environments in \texttt{safe-control-gym}. All results are summarized as mean and standard deviation, i.e. $\mu \pm \sigma$.}
\vspace{-0.3cm}
\begin{tabular}{l|ccc|ccc}
\toprule
\textbf{Method} & \multicolumn{3}{c|}{\textbf{Cartpole}} & \multicolumn{3}{c}{\textbf{Quadrotor}} \\
 & RMSE $\downarrow$ & Success Rate (SR) $\uparrow$ & PA (\%) $\uparrow$ & RMSE $\downarrow$ & Success Rate (SR) $\uparrow$ & PA (\%) $\uparrow$ \\
\midrule
Oracle  &$0.56 \pm 0.08$        &$100/100$        &$-$  & $0.15 \pm 0.02$       &  $100/100$      &$-$              \\
\hline
Nominal MPPI   &$1.37 \pm 1.36$        &$64/100$        &$-$ & $0.16 \pm 0.1$       &  $10/100$      &$-$               \\
Robust MPPI    & $5.57 \pm 0.24$       &  $100/100$      & $-$ &    $0.41 \pm 0.14$    &  $74/100$      & $-$        \\
GPMPC        & $0.73 \pm 0.33$       &  $100/100$      &  $68.1 \pm 19.9$    &   $0.38 \pm 0.2$     &   $78/100$     &  $90.4 \pm 17$     \\
\rebuttaladd{PRMPPI + UKF}        & \rebuttaladd{$0.6 \pm 0.07$}       & \rebuttaladd{$100/100$}       &  \rebuttaladd{$86.8 \pm 8.4$}   &     \rebuttaladd{$0.24 \pm 0.04$}   &  \rebuttaladd{$97/100$}      &   \rebuttaladd{$96.6 \pm 3.3$}      \\
Ours        & $0.59 \pm 0.07$       & $100/100$       &  $90.69 \pm 6.3$   &     $0.17 \pm 0.03$   &  $100/100$      &   $99.5 \pm 0.73$      \\
\bottomrule
\end{tabular}
\vspace{-0.3cm}
\label{tab:method_comparison}
\end{table*}

\subsection{Simulation Setup}
We evaluate our method in the \texttt{safe-control-gym} benchmark environment \cite{yuan2022safe}, which provides nonlinear control tasks with safety constraints. Specifically, we use a constrained quadrotor tracking task and the classic cartpole benchmark. In the cartpole setting, the objective is to drive the cart to the origin $p_c = 0$ while keeping the pole upright, with the safety requirement that neither the cart nor the pole enters the region $p_c < 0$. The cart and pole masses are randomized uniformly within $\pm 10\%$ of their nominal values $m_c = 1\si{kg}$ and $m_p = 0.1\si{kg}$.

In the more challenging quadrotor setting, we consider a circular tracking task with a constrained height as illustrated in Fig.~\ref{fig:TrajVis}. We uniformly randomize the quadrotor's mass and its mass moment of inertia in a range of $\pm 50 \%$ around the nominal values of $m=27\si{g}$ and $I_z = 1.4 \cdot 10^{-5} \si{kg \cdot m^2}$.

We run 100 random simulations in which each simulation consists of 3 repetitive runs. 
This provides enough time to learn the parameters online. The cost function is defined as a quadratic penalization to the reference. 

Further, we set the safety probability to $1-\delta = 0.9$ and use $500$ samples for each, the robust and nominal optimization. All computations are implemented in JAX and performed on a computer with an AMD Ryzen 7 9800X3D CPU, 64 GB of RAM, and an NVIDIA RTX 5090 GPU.

\subsubsection{Evaluation Metrics}
\rebuttaladd{We evaluate our approach with three metrics: root mean squared error (RMSE) measures tracking error against the reference, success rate (SR) reflects the fraction of runs satisfying safety constraints, and parameter accuracy (PA) compares the final parameter estimate to the ground truth, with 
100 \% indicating perfect accuracy.}

\subsubsection{Baselines}
\rebuttaladd{To approximate an upper bound on performance under perfect model knowledge, we include an oracle controller, implemented as a standard MPPI controller.} We also evaluate a nominal MPPI controller that assumes the nominal parameters throughout and does not adapt them online. To isolate the effect of our online parameter estimation module, we further include a robust MPPI baseline that does not update the parameters online, instead randomizing over the prior distribution. \rebuttaladd{Additionally, we add a baseline in which the parameter belief is obtained by an unscented Kalman Filter (UKF).} Lastly, we compare against GPMPC \cite{hewing2019cautious}, a nonlinear MPC approach that incrementally learns a residual dynamics model through Gaussian Process regression (GPR). Because GPR does not run at control frequency, we update the residual model after each of the three simulation runs. Further, we set the safety probability to \(90\%\) which is a pointwise-in-time guarantee. To calculate parameter accuracy, we solve a least-squares problem on the learned residual model using all collected data points to recover maximum likelihood estimates of the parameters. For fair comparison, all baselines use $10^4$ total rollouts as in our method. Only the oracle is allowed to use $10^5$ rollouts to find close-to-optimal solutions.

\begin{figure}
\begin{tikzpicture}

\begin{axis}[
    ybar,
    bar width=9pt,
    width=8.3cm,
    height=4cm,
    ymin=0,
    axis y line*=left,
    ylabel={\textcolor{customred}{RMSE $\downarrow$}},
    ylabel style={text=customred},
    xtick={1,2,3},
    xticklabels={Lap 1, Lap 2, Lap 3},
    xtick align=inside,
    ymajorgrids=false,
    xmin=0.6, xmax=3.4,
    legend style={
        at={(0.38,1.2)},
        anchor=north,
        legend columns=-1,
        /tikz/every even column/.append style={column sep=1cm},
        draw=none
    }
]

\addplot+[
    style={customred, fill=customred},
    fill opacity=0.5,
    error bars/.cd,
    y dir=both, y explicit,
    error bar style={thick, draw=customred},
] coordinates {
    (0.85,0.2055) +- (0,0.0332)
    (1.85,0.1714) +- (0,0.0443)
    (2.85,0.1713) +- (0,0.0800)
};

\addplot+[
    style={customred, fill=lightred},
    fill opacity=0.5,
    error bars/.cd,
    y dir=both, y explicit,
    error bar style={thick, draw=customred},
] coordinates {
    (0.84,0.6376) +- (0,0.1185)
    (1.84,0.2529) +- (0,0.0833)
    (2.84,0.2402) +- (0,0.0768)
};

\pgfplotsset{
    legend image code/.code={
        \draw[customred, fill=customred, fill opacity=0.5] (0cm,-0.1cm) rectangle (0.2cm,0.1cm);
        \draw[customblue, fill=customblue, fill opacity=0.5] (0.22cm,-0.1cm) rectangle (0.42cm,0.1cm);
    }
}
\addlegendimage{empty legend}
\addlegendentry{Ours}

\end{axis}

\begin{axis}[
    ybar,
    bar width=9pt,
    width=8.3cm,
    height=4cm,
    ymin=0,
    axis y line*=right,
    axis x line=none,
    ylabel={\textcolor{customblue}{PA $\uparrow$}},
    ylabel style={text=customblue},
    xtick=\empty,
    ymajorgrids=false,
    xmin=0.6, xmax=3.4,
    legend style={
        at={(0.62,1.2)},
        anchor=north,
        legend columns=-1,
        /tikz/every even column/.append style={column sep=1cm},
        draw=none
    }
]

\addplot+[
    style={customblue, fill=customblue},
    fill opacity=0.5,
    error bars/.cd,
    y dir=both, y explicit,
    error bar style={thick, draw=customblue},
] coordinates {
    (1.16,0.9814) +- (0,0.0425)
    (2.16,0.9948) +- (0,0.0080)
    (3.16,0.9939) +- (0,0.0103)
};

\addplot+[
    style={customblue, fill=lightblue},
    fill opacity=0.5,
    error bars/.cd,
    y dir=both, y explicit,
    error bar style={thick, draw=customblue},
] coordinates {
    (1.15,0.4677) +- (0,0.35)
    (2.15,0.7532) +- (0,0.32)
    (3.15,0.9027) +- (0,0.17)
};

\pgfplotsset{
    legend image code/.code={
        \draw[lightred, fill=lightred, fill opacity=0.5] (0cm,-0.1cm) rectangle (0.2cm,0.1cm);
        \draw[lightblue, fill=lightblue, fill opacity=0.5] (0.22cm,-0.1cm) rectangle (0.42cm,0.1cm);
    }
}
\addlegendimage{empty legend}
\addlegendentry{GPMPC}

\end{axis}

\end{tikzpicture}
\vspace{-1.0cm}
\caption{RMSE and PA over multiple laps in the simulated Quadrotor environment. Our method identifies parameters more quickly resulting in lower RMSEs.}
\label{fig:Barplot}
\vspace{-0.2cm}
\end{figure}
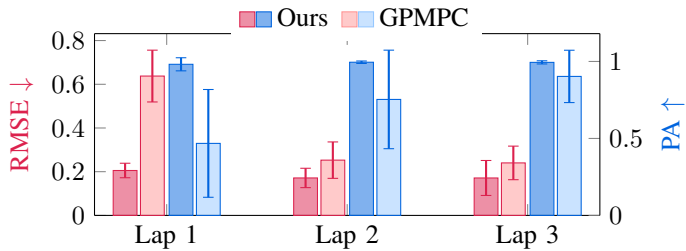

\subsubsection{Comparison to Baselines}
\label{sec:Baselines}
The quantitative results of the simulation study are summarized in Table~\ref{tab:method_comparison}. Our method outperforms all baselines in terms of RMSE, SR and PA in both environments. Only the nominal MPPI controller is not able to satisfy safety constraints in the cartpole setting which can be attributed to wrong parameters which results in incorrect rollouts. Robust MPPI is able to satisfy safety constraints in the cartpole environment at the cost of a high RMSE which indicates that this baseline performs overly conservative. In the quadrotor environment, the robust approach even leads to significant safety violations in terms of SR. This can be attributed to the robust approach not being able to stabilize the quadrotor under significant parameter randomizations of up to 50\% which makes the robust control problem challenging and eventually leads to the quadrotor crashing. \rebuttaladd{PRMPPI with a UKF instead of the SVGD belief performs competitively in terms of tracking but shows reduced parameter accuracy and slightly more safety violations, suggesting that a purely unimodal Gaussian belief may not fully capture the parameter uncertainty encountered in these tasks.} Lastly, GPMPC performs similarly in the cartpole environment but does not achieve good results in terms of SR in the quadrotor environment. 
We suspect that this behavior arises because GPMPC learns the system parameters only implicitly through GPR and therefore struggles to generalize beyond the data manifold on which it was trained. During the first lap, data is collected based on a uniform GP prior and subsequently used for training. However, in the second lap, the quadrotor follows a slightly different trajectory for which no data has been collected, causing the state to leave the previously explored data manifold and forcing the controller to rely largely on the uninformed prior again. This effect is reflected in Fig.~\ref{fig:Barplot}, where the parameter accuracy improves only gradually and converges to a reasonable estimate by the third round. In contrast, our method rapidly identifies the unknown parameters from the outset, which also leads to a lower RMSE.

\begin{figure*}[t]
    \centering
    \includegraphics[width=1\linewidth]{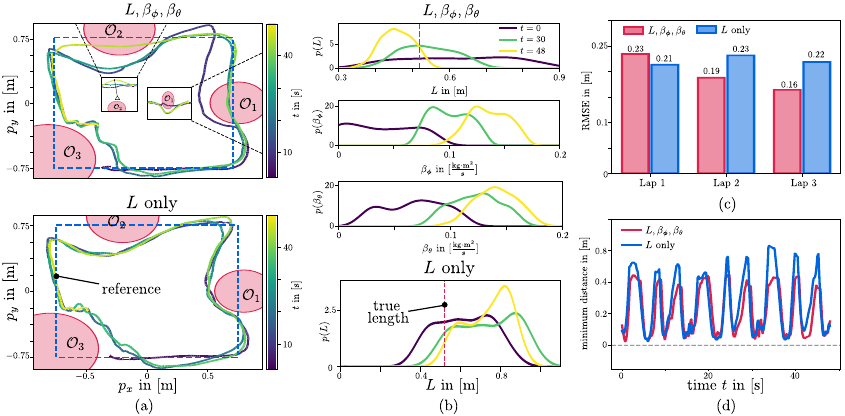}
    \vspace{-0.7cm}
    \caption{Experimental result for both parameter learning settings. (a) Illustration of the traversed $xy$ trajectories of the Crazyflie. Obstacles are shown in red and a colormap indicates the time scale. (b) The parameter beliefs at different time steps. (c) A barplot  indicating that learning damping parameters and length jointly can result in more accurate beliefs leading to lower \rebuttaladd{trajectory} RMSEs over time. (d) The minimum distance to obstacles over time.}
    \label{fig:Experiments}
    \vspace{-0.2cm}
\end{figure*}
Both PRMPPI and GPMPC perform worse in terms of PA in the cartpole environment. This could be because the pendulum mass becomes unobservable in the upright pole position. Thus, most of the state observations are uninformative which leads to an overall worse parameter belief.
\begin{table}[b]
\vspace{-0.5cm}
\centering
\caption{Influence of Robust Backup Trajectory.}
\vspace{-0.2cm}
\begin{tabular}{@{}l|cc|cc@{}}
\toprule
\multirow{2}{*}{\centering \textbf{Configuration}} 
& \multicolumn{2}{c|}{Fully Observable} 
& \multicolumn{2}{c}{Partially Observable} \\
 & \textbf{RMSE} & \textbf{SR} 
 & \textbf{RMSE} & \textbf{SR} \\
\midrule
w Robust Traj.    & $0.17 \pm 0.03$ & 100/100 & $0.18 \pm 0.04$ & 95/100 \\
wo Robust Traj.   & $0.18 \pm 0.03$ & 97/100  & $0.18 \pm 0.02$ & 86/100 \\
\bottomrule
\end{tabular}
\label{tab:ablation_robust}
\end{table}
\subsubsection{Ablation Robust Trajectory}
\label{sec:AblationRobust}
We further investigate the impact of the proposed parallel optimization structure for generating robust trajectories. To this end, we conduct an ablation study (Table~\ref{tab:ablation_robust}) in both the standard quadrotor environment and a partially observable variant. \rebuttaladd{In this setting, ''partial observability’’ means that height constraints are revealed only when the quadrotor is within 40 cm of an obstacle.}
The robust cost defined in Eq.~\eqref{eq:robust_cost} is applied under the conservative assumption that the current state is always subject to constraints within $\pm 40\si{cm}$. Our results show that, while this structure yields only minor safety gains in the fully observable case, it improves the success rate by more than $10\%$ in the partially observable environment. This demonstrates that robustly bootstrapping a cost-minimizing MPPI controller can substantially improve satisfaction of safety requirements.

\subsubsection{Ablation Probabilistic Guarantee}
\label{seq:AblationSamples}
Lastly, we evaluate the effect of the number of parameter samples $P$ on both performance and computation time in the quadrotor environment, see Table~\ref{tab:ablation_samples}. Interestingly, a safety probability of 
90\% yields the best performance, while lower probability thresholds achieve lower RMSE at the expense of increased safety violations. As the safety probability increases, the RMSE rises, which can be attributed to the controller adopting more conservative behavior. Notably, the 
95\% setting still exhibits a single collision, which could be due to the inherent stochasticity in parameter sampling. Importantly, we observe that all configurations can still operate at the control frequency of $50\si{Hz}$, even when using up to $10^5$ rollouts. The computation time includes the SVGD updates performed at each step.
\begin{table}[h!]
\vspace{-0.2cm}
\centering
\caption{Influence Of The Number of Samples.}
\vspace{-0.2cm}
\begin{tabular}{@{}l|cccc}
\toprule
\textbf{$1-\delta$} & \textbf{Samples $P$} & \textbf{RMSE} & \textbf{Comp. Time (ms)} & \textbf{SR} \\
\midrule
0.8 & 5 & $0.17 \pm 0.02$ & $9.22 \pm 0.32$ & 97/100 \\
0.9 & 10 & $0.17 \pm 0.03$ & $10.08 \pm 0.3$ & 100/100 \\
0.95 & 20 & $0.19 \pm 0.05$ & $10.65 \pm 0.33$ & 99/100 \\
0.99 & 100  & $0.19 \pm 0.04$ & $11.42 \pm 0.28$  & 100/100 \\
\bottomrule
\end{tabular}
\label{tab:ablation_samples}
\vspace{-0.5cm}
\end{table}

\subsection{Hardware Experiment}
\label{sec:Hardware}
We validate PRMPPI on a Crazyflie 2.1 brushless quadrotor with a cable-suspended payload operating in a cluttered indoor environment\footnote{Video available at \href{https://www.youtube.com/watch?v=xqmjUHMO3MQ}{https://www.youtube.com/watch?v=xqmjUHMO3MQ}} as shown in Fig.~\ref{fig:firstpage}. We use the Lighthouse positioning system to measure the 3D position of the drone and payload. Notably, the payload has a mass of $m_p = 23\si{g}$ which makes more than $50\%$ of the quadrotors mass of $m_c = 41.1\si{g}$ and, thus, has a considerable effect on its dynamics. In experiments, we found that simply using the firmware position controllers was not sufficient and resulted in instability.

To obtain the dynamics model of the quadrotor-payload system, we model both rigid bodies as point masses connected by a massless inextensible cable, with an acceleration input to the drone $\ub \in \mathbb{R}^3$. We describe the generalized coordinates as $\bm{q} =[p_x \ \ p_y \ \ p_z \ \ \phi \ \ \theta]^T$ where $p$ denotes the position of the quadrotor and $\phi$ and $\theta$ denote the azimuth and polar angle of the pendulum. We obtain the simplified equations of motion via Lagrange’s method, which captures the dominant coupling between the vehicle and payload. We note that this is a rather simplified model, as it neither considers the nonlinearities in the quadrotor control nor any aerodynamic effects on the system but we found it to be sufficient for the proposed navigation task. Further, we found that there is a considerable effect of the generated airflow of the quadrotor (also known as \emph{downwash force}) on the pendulum dynamics. Since this force is difficult to model, we assume the pendulum dynamics include linear damping terms $\beta_{\phi} \dot{\phi}$ and $\beta_{\theta}\dot{\theta}$ with coefficients to be estimated online. We treat the pendulum length $L=0.52\si{m}$ as unknown and estimate it from a uniform prior $\mathcal{U}(0.3, 0.9)$.

The task is to track a square trajectory of length $1.5\,\si{m}$ while satisfying obstacle-avoidance constraints around three obstacles $\{\mathcal{O}_1,\mathcal{O}_2,\mathcal{O}_3\}$ as shown in Fig.~\ref{fig:firstpage} and Fig.~\ref{fig:Experiments}(a). Physically, the quadrotor-payload system is able to fly below obstacle $\mathcal{O}_1$ and above $\mathcal{O}_2$ but only does so when the estimated length is correct.
We report two different settings, one in which damping is assumed to be zero ($L$ only) and one in which we estimate $L, \beta_{\phi}$ and $\beta_{\theta}$ simultaneously.  
The controller runs at $30\,\si{Hz}$, and uses the same hyperparameters as in Sec.~\ref{sec:Baselines}.

\paragraph{Discussion} First, we note that PRMPPI ensures constraint satisfaction throughout all runs: no collisions are observed, and $h(\xb_t)$ remains nonnegative with small margins near tight passages, see Fig.~\ref{fig:Experiments}(d). Fallbacks to the robust trajectory are concentrated around $\mathcal{O}_3$, where the simplified dynamics and payload coupling make many cost-optimizing nominal rollouts infeasible. As online learning progresses in the three-parameter setting, the controller exploits tighter paths: After the first lap, the drone flies below $\mathcal{O}_1$ and after the second lap starts flying above $\mathcal{O}_2$ since the estimated length is below $60\si{cm}$. This results in improved tracking performance per lap as indicated by the RMSE in Fig.~\ref{fig:Experiments}(c). 
In contrast, when only estimating the length $L$ without damping, the swinging amplitude is lower than expected which results in overestimation of the length as seen in Fig.~\ref{fig:Experiments}(b). Consequently, the quadrotor is not able to take shortcuts which results in a worse RMSE. Therefore, we highlight that the performance of the proposed combined online parameter estimation and control method is naturally limited by the expressivity of the model and, thus, needs to be carefully designed. \rebuttaladd{Since our models omit several nonlinearities, we do not expect the learned parameters to match their true values exactly. In practice, however, the simplified dynamics are sufficient to keep the true parameters within the support of the belief, enabling robot safety despite model mismatch.}

\rebuttaladd{Since the true damping is unknown, we used a broad prior of \(\mathcal{U}(0, 0.1)\) for the damping coefficients. Despite this misspecification, SVGD is able to recover and converge to a distribution centered around \(0.15\,\mathrm{kg\cdot m^2/s}\).
}
Although the estimated length in the three-parameter setting underestimates the true length, it can be seen that the true length is in the support of the learned distribution.

\section{Conclusions and Future Work}
In this paper, we present parameter-robust MPPI, which combines an SVGD-based particle filter for online parameter learning with a controller that simultaneously optimizes nominal and robust trajectories. By enforcing a joint chance constraint over a receding horizon, our method enables safe online parameter learning and improves both performance and safety compared to existing baselines. The proposed parallel optimization structure proved effective in simulations and in hardware experiments on a quadrotor–payload system, demonstrating its practicality for real-world robotics. Future work will address the belief mismatch noted in Remark~\ref{remark1} and the coupling between parameter and state estimation.

\balance
\hypersetup{urlcolor=black}
\bibliographystyle{IEEEtran}
\vspace{-0.1cm}
\bibliography{references.bib}

\end{document}